\tikzstyle{line}=[draw]
\tikzstyle{arrow}=[draw, -latex] 
\newtheorem{theorem}{Theorem}
\newtheorem{lemma}[theorem]{Lemma}
\newcommand{\beq}{\begin{equation}}
\newcommand{\eeq}{\end{equation}}
\newcommand{\beqa}{\begin{eqnarray}}
\newcommand{\eeqa}{\end{eqnarray}}
\newcommand{\bit}{\begin{itemize}}
\newcommand{\eit}{\end{itemize}}
\newcommand{\ben}{\begin{enumerate}}
\newcommand{\een}{\end{enumerate}}
\newcommand{\mc}{\mathcal}
\newcommand{\mb}{\mathbb}
\newcommand{\bed}{\begin{displaymath}}
\newcommand{\eed}{\end{displaymath}}
\tikzset{my arrow/.style={
    single arrow, draw, minimum height=1.75cm,
    minimum width=2.5cm,
    single arrow head extend=0.1cm
  },
brace/.style={
    decoration={brace,mirror,raise=\bracedistance,amplitude=0.75em},
    decorate,
    draw=\bracecol,
    very thick,    
  },
brace_nonmirror/.style={
    decoration={brace,raise=\bracedistance,amplitude=0.75em},
    decorate,
    draw=\bracecol,
    very thick,    
  }
}
\tikzset{xcenter around/.style 2 args={execute at end picture={%
  \useasboundingbox let \p0 = (current bounding box.south west), \p1 = (current bounding box.north east),
                        \p2 = (#1), \p3 = (#2)
                    in
        ({min(\x2 + \x3 - \x1,\x0)},\y0) rectangle ({max(\x3 + \x2 - \x0,\x1)},\y1);
}}}
\begin{document}
\sloppy
\title{Coupled Neural Associative Memories}
\author{
  \IEEEauthorblockN{Amin Karbasi, Amir Hesam Salavati, and Amin Shokrollahi}
  \IEEEauthorblockA{School of Computer and Communication Sciences\\ Ecole Polyetechnique Federale de Lausanne (EPFL)\\ Switzerland} 

}

\maketitle

\begin{abstract}
We propose a novel architecture to design a neural associative memory that is capable of learning a large number of patterns and recalling them later in presence of noise. It is based on dividing the neurons into local clusters and parallel plains, very  similar to the architecture of the visual cortex of macaque brain. The common features of our proposed architecture with those of spatially-coupled codes  enable us to show that the performance of such networks in eliminating noise is drastically better than the previous approaches while maintaining the ability of learning an exponentially large number of patterns. Previous work either failed in providing good performance during the recall phase or in offering large pattern retrieval (storage) capacities. We also present computational experiments that lend additional support to the theoretical analysis. 
\end{abstract}

\section{Introduction}
The ability of the brain to memorize large quantities of data and later recalling them from partially available information is truly staggering. 
While relying on iterative operations of simple (and sometimes faulty) neurons, our brain is capable of retrieving the correct "memory" with high degrees of reliability even when the cues are limited or inaccurate.

Not surprisingly, designing artificial neural networks capable of accomplishing this task, called \emph{associative memory}, has been a major point of interest in the neuroscience community for the past three decades. This problem, in its core, is very similar to the reliable information transmission faced in communication systems where the goal is to find mechanisms to efficiently encode and decode a set of transmitted patterns over a noisy channel. More interestingly, the novel techniques employed to design good codes are extremely similar to those used in designing and analyzing neural networks. In both cases, graphical models, iterative algorithms, and message passing play central roles.

Despite these similarities in the objectives and techniques, we witness a huge gap in terms of the efficiency achieved by them. More specifically, by using modern coding techniques, we are capable of reliably transmitting $2^{rn}$ binary vectors of length $n$ over a noisy channel ($0<r<1$). This is achieved by intelligently introducing redundancy among the transmitted messages, which is later used to recover the correct pattern from the received noisy version. In contrast, until recently, artificial neural associative memories were only capable of memorizing $O(n)$ binary patterns of length $n$ (see, \cite{hopfield}, \cite{venkatesh}, \cite{Jankowski}, \cite{Muezzinoglu1}). 

Part of the reasons for this gap goes back to the assumption held in the mainstream work on artificial associative memories which requires the network to memorize \emph{any} set of randomly chosen binary patterns. While it gives the network a certain degree of versatility, it severely hinders the efficiency.

To achieve an exponential scaling in the storage capacity of neural networks Kumar et al. \cite{KSS} suggested a different viewpoint in which the network is no longer required to memorize \emph{any} set of random patterns but only those that have some common \emph{structure}, namely, patterns all belong to a subspace with dimension $k<n$. Karbasi et al. \cite{KSS_ICML2013} extended this model to "modular" neural architectures and introduced a suitable online learning algorithm. They showed that the modular structure improves the noise tolerance properties significantly.

In this work, we extend the model of \cite{KSS_ICML2013} further by linking the modular structures to obtain a "coupled" neural architecture. Interestingly, this model looks very similar to some models for processing visual signals in the macaque brain \cite{acm_brain}. We then make use of the recent developments in the analysis of spatially-coupled codes by \cite{pfister_istc} and \cite{kudekar} to derive analytical bounds on the performance of the proposed method. Finally, using simulations we show that the proposed method achieves much better performance measures compared to previous work in eliminating noise during the recall phase.

\section{Related Work}\label{sec:related}
Arguably, one of the most influential models for  neural associative memories was introduced by Hopfield \cite{hopfield}. 
%
%
A "Hopfield network" is a complete graph of $n$ neurons that memorizes a subset of \emph{randomly} chosen binary patterns of length $n$. It is known that the pattern retrieval capacity (i.e., maximum number of memorized patterns) of Hopfield networks is $\mc{C} = (n/2\log(n))$ \cite{mceliece}.

There have been many attempts to increase the pattern retrieval capacity of such networks by introducing offline learning schemes (in contrast to online schemes) \cite{venkatesh} or multi-state neurons (instead of binary) \cite{Muezzinoglu1}, all of which resulted in memorizing at most $O(n)$ patterns. 

By dividing the neural architecture into smaller \emph{disjoint} blocks, Venkatesh \cite{venkatesh_exponential} increased the capacity to $\Theta \left(b^{n/b} \right)$ (for \emph{random} patterns), where $b = \omega( \ln n)$ is the size of blocks. This is a huge improvement but comes at the price of limited \emph{worst-case} noise tolerance capabilities. Specifically, due to the non-overlapping nature of the clusters (blocks), the error correction is limited by the performance of individual clusters as there is no inter-cluster communication. With \emph{overlapping} clusters, one could hope for achieving better error correction, which is the reason we consider such structures in this paper. 

More recently, a new perspective has been proposed with the aim of memorizing only those patterns that posses some degree of redundancy. In this framework, a tradeoff is being made between versatility (i.e., the capability of the network to memorize any set of random patterns) and the pattern retrieval capacity. Pioneering this frontier, Gripon and Berrou \cite{gripon_sparse} proposed a method based on neural clicks which increases the pattern retrieval capacity potentially to $O(n^2/\log(n))$ with a low complexity algorithm in the recall phase. The proposed approach is based on memorizing a set of patterns mapped from randomly chosen binary vectors of length $k = O(\log(n))$ to the $n$-dimensional space. 
Along the same lines, by considering patterns that lie in a subspace of dimension $k<n$, Kumar et al. were able to show an exponential scaling in the pattern retrieval capacity, i.e., $\mc{C} = O(a^n)$, with some $a>1$. This model was later extended to modular patterns, i.e., those in which patterns are divided into sub-patterns where each sub-pattern come from a subspace \cite{KSS_ICML2013}. The authors provided a simple iterative learning algorithm that demonstrates a better performance in the recall phase as compared to \cite{KSS}.

In this paper, we follow the same line of thought by linking several instances of the model proposed in \cite{KSS_ICML2013} in roder to have a "coupled" structure. More specifically, the proposed model is based on \emph{overlapping} local clusters, arranged in parallel planes,   with neighboring neurons. At the same time, we enforce sparse connections between various clusters in different planes. The aim is to memorize only those patterns for which local sub-patterns in the domain of each cluster show a certain degree of redundancy.
On the one hand, the obtained model looks similar to  neural modules in the visual cortex of the macaque brain \cite{acm_brain}. And on the other hand, it is  similar to spatially-coupled codes on graphs \cite{kudekar}. Specifically, our suggested model is closely related to the spatially-coupled Generalized LDPC code (GLDPC) with Hard Decision Decoding (HDD) proposed in \cite{pfister_hdd}. This similarity helps us borrow analytical tools developed for analyzing such codes \cite{pfister_istc} and investigate the performance of our proposed neural error correcting algorithm. 

The proposed approach enjoys the simplicity of message passing operations performed by neurons as compared to the more complex iterative belief propagation decoding procedure of spatially coupled codes \cite{kudekar}. This simplicity may lead to an inferior performance but already allows us to outperform the prior error resilient methods suggested for neural associative memories in the literature. 

%

\section{Problem Setting and Notations}\label{sec:model}
In this paper, we work with non-binary neural networks, where the state of each neurons is a bounded non-negative integer (which can be thought of as the short-term firing rate of neurons in a real neural network). Like other neural networks, neurons could only perform simple operations, i.e. \textit{linear summation} and \textit{non-linear thresholding}. More specifically, a neuron $x$ updates its state based on the states of its neighbors $\{s_i\}_{i=1}^{n}$ as follows:
\begin{enumerate}
\item It computes the weighted sum
$ h = \sum_{i=1}^{n} w_i s_i,$ 
where $w_i$ denotes the weight of the input link from $s_i$.
\item It updates its state as $x = f(h),$
where $f: \mb{R} \rightarrow \mc{S}$ is a possibly non-linear function.
\end{enumerate}

Let $\mc{X}$ denote a dataset of $\mc{C}$ patterns of length $n$ where the patterns are integer-valued with entries in $\mc{S} = \{0,\dots, S-1\}$. A natural way of interpreting this model is to consider the entries as the short-term firing rate of $n$ neuron. In this paper we are interested in designing a neural network that is capable of memorizing these patterns in such a way that later, and in response to noisy queries, the correct pattern will be returned. To this end, we break the patterns into smaller pieces/sub-patterns and learn the resulting sub-patterns separately (and in parallel). Furthermore, as our objective is to memorize those patterns that are highly correlated, we only consider a dataset in which the sub-patterns belong to a subspace (or have negligible minor components).

More specifically, and to formalize the problem in a way which is similar to the literature on spatially coupled codes, we divide each pattern into $L$ sub-patterns of the same size and refer to them as \textit{planes}. Within each plane, we further divide the patterns into $D$ \emph{overlapping} clusters, i.e., an entry in a pattern can lie in the domain of multiple clusters. We also assume that each element in plane $\ell$ is connected to at least one cluster in planes $\ell-\Omega,\dots,\ell+\Omega$ (except at the boundaries). Therefore, each entry in a pattern is connected to $2\Omega+1$ planes, on average. 

An alternative way of understanding the model is to consider  2D datasets, i.e., images. In this regard, each row of the image corresponds to a plane and clusters are the overlapping "receptive fields" which cover an area composed of neighboring pixels in different rows (planes). This is in fact very similar to the configuration of the receptive fields in human visual cortex \cite{dyan}. Our assumptions on strong correlations then translates into assuming strong linear dependencies within each receptive field for all patterns in the dataset.

\textbf{Noise model:} Throughout this paper, we consider an additive noise model. More specifically, the noise  is an integer-valued vector of size $n$ and for simplicity we assume that its entries are $\{-1,0,+1\}$, where a $-1$ (resp. $+1$) corresponds to a neuron skipping a spike (resp. fire one more spike than expected).\footnote{Other noise models, such as real-valued noise, can be considered as well. However, the thresholding function $f:\mb{R}\rightarrow \mc{S}$ will eventually lead to integer-valued "equivalent" noise in our system.} The noise probability is denoted by $p_e$ and each entry of the noise vector is $+1$ or $-1$ with probability $p_e/2$. \footnote{Our algorithm can also deal  with erasures. Note that an erasure at node $x_i$ corresponds to an integer noise with the negative value of $x_i$. So once we have established the performance of our algorithm for integer-valued noise, it would be straightforward to extend the algorithm to erasure noise models.} 

\textbf{Pattern Retrieval Capacity:} This is the maximum number of patterns that can be memorized by a network while still being able to return reliable responses in the recall phase.

\section{Learning Phase} 
To "memorize" the patterns, we learn a set of vectors that are orthogonal to the sub-patterns in each cluster, using the algorithm proposed in \cite{KSS_ICML2013}. The output of the learning algorithm is an $m_{\ell,d} \times n_{\ell,d}$ matrix $W^{(\ell,d)}$ for cluster $d$ in plane $\ell$. The rows of this matrix correspond to the dual vectors and the columns correspond to the corresponding entries in the patterns. Therefore, by letting $\textbf{x}^{(\ell,d)}$ denote the sub-pattern corresponding to the domain of cluster $d$ of plane $\ell$, we have
\begin{equation}\label{eq:orthogonal}
W^{(\ell,d)}\cdot \textbf{x}^{(\ell,d)} = \textbf{0}.
\end{equation}

These matrices (i.e., $W^{(\ell,d)}$) form the connectivity matrices of our neural graph, in which we can consider each cluster as a bipartite graph composed of \emph{pattern} and \emph{constraint} neurons. The left panel of Figure~\ref{fig:coupled_model} illustrates the model, in which the circles and rectangles correspond to pattern and constraint neurons, respectively. The details of the first plane are magnified in the right panel of Figure~\ref{fig:coupled_model}. 

\begin{figure}[t]%
\centering
\input{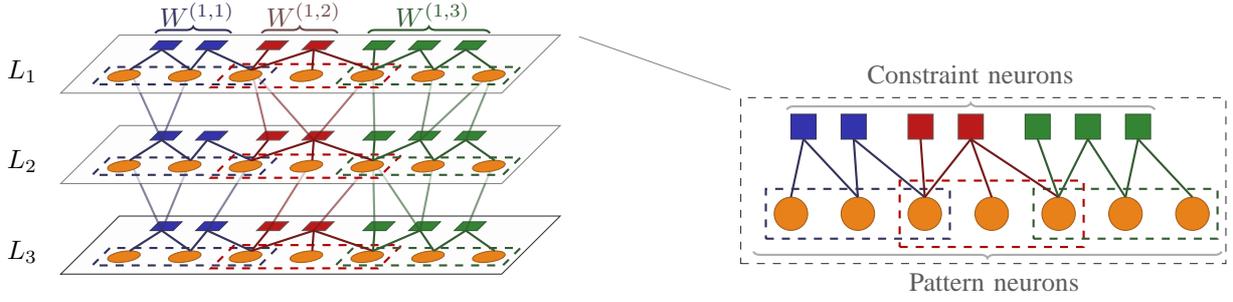}
\caption{A coupled neural associative memory.}
\label{fig:coupled_model}
\end{figure}

Cluster $d$ in plane $\ell$ contains $m_{\ell,d}$ constraint neurons and is connected to $n_{\ell,d}$ pattern neurons. The constraint neurons do not have any overlaps (i.e. each one belongs only to one cluster) whereas the pattern neurons can have connections to multiple clusters and planes. To ensure good error correction capabilities we aim to keep the neural graph sparse (this model shows significant  similarity to some neural architectures in the macaque brain \cite{acm_brain}). 

We also consider the overall connectivity graph of plane $\ell$, denoted by $\widetilde{W}^{(\ell)}$, in which the constraint nodes in each cluster are compressed into one \emph{super node}. Any pattern node that is connected to a given cluster is connected with an (unweighted) edge to the corresponding super node. Figure~\ref{super_graph_model} illustrates this graph for plane $1$ in Figure \ref{fig:coupled_model}.

\begin{figure}[b]
\centering
\begin{tikzpicture}[scale=.36]

\draw[draw=black!60!gray!100,fill=gray!2] (-3,-3) rectangle (26,7);

\draw[draw=black!80!blue!80,fill=black!40!blue!80,rotate around={45:(4,4.5)}] (4,4.5) rectangle (5.5,6);
\draw[draw=black!80!red!70,fill=black!30!red!90,rotate around={45:(12,4.5)}] (12,4.5) rectangle (13.5,6) ;
\draw[draw=black!80!green!80,fill=black!60!green!80,rotate around={45:(20,4.5)}] (20,4.5) rectangle (21.5,6) ;

\draw[draw=black!40!orange!90,fill=black!10!orange!90] (0,0) circle (1cm);
\draw[draw=black!40!orange!90,fill=black!10!orange!90] (4,0) circle (1cm);
\draw[draw=black!40!orange!90,fill=black!10!orange!90] (8,0) circle (1cm);
\draw[draw=black!40!orange!90,fill=black!10!orange!90] (12,0) circle (1cm);
\draw[draw=black!40!orange!90,fill=black!10!orange!90] (16,0) circle (1cm);
\draw[draw=black!40!orange!90,fill=black!10!orange!90] (20,0) circle (1cm);
\draw[draw=black!40!orange!90,fill=black!10!orange!90] (24,0) circle (1cm);

\draw[draw=black!80!blue!80,dashed,thick] (-1.5,-1.5) rectangle (9.5,1.5);

\draw[draw=black!30!red!100,dashed,thick] (6.5,-2) rectangle (17.5,2);

\draw[draw=black!80!green!80,dashed,thick] (14.5,-1.5) rectangle (25.5,1.5);

\draw[thick,black!80!blue!80] (0,1) --(4,4.5);
\draw[thick,black!80!blue!80] (4,1) --(4,4.5);
\draw[thick,black!80!blue!80] (8,1) --(4,4.5);

\draw[thick,black!60!red!90] (8,1) -- (12,4.5);
\draw[thick,black!60!red!90] (12,1) -- (12,4.5);
\draw[thick,black!60!red!90] (16,1) -- (12,4.5);

\draw[thick,black!80!green!80] (16,1) --(20,4.5);
\draw[thick,black!80!green!80] (20,1) --(20,4.5);
\draw[thick,black!80!green!80] (24,1) --(20,4.5);

\end{tikzpicture}
\caption{A connectivity graph with neural planes and super nodes. It corresponds to plane $1$ of Fig.~\ref{fig:coupled_model}.}
\label{super_graph_model}
\end{figure}
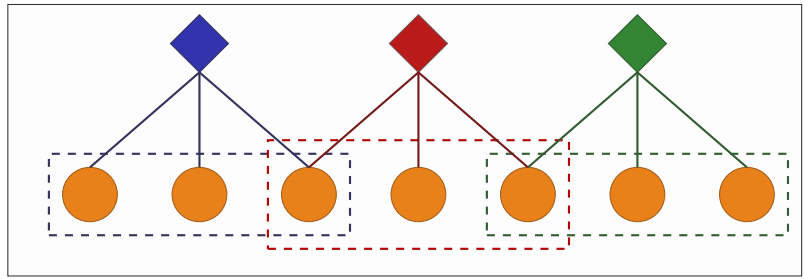

\section{Recall Phase}\label{sec:recall}
The main goal of our architecture is to retrieve correct memorized patterns in response to noisy queries. At this point, the neural graph is learned (fixed) and we are looking for a simple iterative algorithm to eliminate noise from  queries. The proposed recall algorithm in this paper is the extension of the one in \cite{KSS_ICML2013} to the coupled neural networks. For the sake of completeness, we briefly discuss the details of the approach suggested in \cite{KSS_ICML2013} and explain the extension subsequently. 

The recall method in \cite{KSS_ICML2013} is composed of two types of separate algorithms: \emph{local} (or intra-cluster) and \emph{global} (or inter-cluster). The local algorithm tries to correct errors within each cluster by the means of simple message-passings. It relies on 1) pattern neurons transmitting their state to the constraint neurons and then on 2) constraint neurons checking if the constraints are met (i.e. the values transmitted by the pattern nodes to the constraint nodes should sum to zero). If not, the constraint neurons send a message telling the direction of the violation (i.e. if the input sum is less or greater than zero). The pattern neurons then updates their state according to the received feedback from their neighboring constraint neurons on a majority voting basis. The process is summarized in Algorithm~\ref{algo:correction}.

The overall error correction properties of Algorithm~\ref{algo:correction} is fairly limited. In fact, it can be shown that in a given cluster, the algorithm could correct a single input error (i.e only one pattern neurons deviating from its correct state) with probability  $1- (\bar{d}/m)^{d_{\min}}$, where $\bar{d}$ and $d_{\min}$ are the average and minimum degree of the pattern nodes. For more than one input error, the algorithm can easily get stuck. To overcome this drawback, Karbasi et al. \cite{KSS_ICML2013} proposed a sequential procedure by applying Algorithm~\ref{algo:correction} in a Round Robbin fashion to each cluster. If the errors were eliminated, the pattern nodes in the cluster keep their new values, and revert back to their original states, otherwise. This scheduling technique is in esprit similar to the Peeling Algorithm widely used in LDPC codes \cite{erasure}. Correcting the error in the clusters with a single error can potentially help the neighboring clusters. 

%
%

Inspired by this boost in the performance, we can stretch the error correction capabilities even further by coupling several neural "planes" with many clusters together, as mentioned earlier. We need to modify the global error correcting algorithm in such a way that it first acts upon the clusters of a given plane in each round before moving to the next plane. The whole process is repeated few times until all errors are corrected or a threshold on the number of iterations is reached ($t_{\max}$).  Algorithm~\ref{algo:coupled} summarizes our approach.

\begin{algorithm}[t]
\caption{Error Correction Within Cluster \cite{SK_ISIT2012}}
\label{algo:correction}
\begin{algorithmic}[1]
\REQUIRE{Connectivity matrix $W^{(\ell,d)}$, threshold $\varphi$, iteration $t_{\max}$.}
\ENSURE{Correct memorized sub-pattern $\textbf{x}^{(\ell,d)}$.}
\FOR{$t = 1 \to t_{\max}$} 
\STATE \textit{Forward iteration:} Calculate the weighted input sum $ h_i = \sum_{j=1}^n W^{(\ell,d)}_{ij} x^{(\ell,d)}_j,$ for each neuron $y^{(\ell,d)}_i$ and set:
\[ y^{(\ell,d)}_i = \left\{
\begin{array}{cc}
1, & h_i < 0\\
0, & h_i = 0\\
-1, & \hbox{otherwise}
\end{array} \right. 
\]
\STATE \textit{Backward iteration:} Each neuron $x^{(\ell,d)}_j$ computes
\[ g^{(\ell,d)}_j = \frac{\sum_{i=1}^{m_{\ell,d}} W^{(\ell,d)}_{ij} y^{(\ell,d)}_i }{\sum_{i = 1}^{m_{\ell,d}}|W^{(\ell,d)}_{ij}|}. \]
\STATE Update the state of each pattern neuron $j$ according to $x^{(\ell,d)}_j = x^{(\ell,d)}_j + \hbox{sgn}(g^{(\ell,d)}_j)$
only if $|g^{(\ell,d)}_j| > \varphi$.
\ENDFOR
\end{algorithmic}
\end{algorithm}
\begin{algorithm}[t]
\caption{Error Correction of the Coupled Network}
\label{algo:coupled}
\begin{algorithmic}[1]
\REQUIRE{Connectivity matrix ($W^{(\ell,d)},\forall \ell,\forall d$), iteration $t_{\max}$}
\ENSURE{Correct memorized pattern $\textbf{x} =[x_1,x_2,\dots,x_{n}]$}
\FOR{$t = 1 \to t_{\max}$} 
\FOR{$\ell = 1 \to L$} 
\FOR{$d = 1 \to D$}
\STATE Apply Algorithm~\ref{algo:correction} to cluster $d$ of neural plane $\ell$.
\STATE Update the value of pattern nodes $\textbf{x}^{(\ell,d)}$ only if all the constraints in the clustered are satisfied. 
\ENDFOR
\ENDFOR 
\ENDFOR
\end{algorithmic}
\end{algorithm}
\vspace{0.15cm}


\section{Performance Analysis}
We consider two variants of the above error correction algorithm. In the first one, called \emph{constrained} coupled neural error correction, we provide the network with some side information during the recall phase. This is equivalent to "freezing" a few of the pattern neurons to known and correct states, similar to spatially-coupled codes \cite{pfister_istc}, \cite{kudekar}. In the case of neural associative memory, the side information can come from the context. For instance, when trying to correct the error in the sentence "The \emph{\underline{c}}at flies", we can use the side information (flying) to guess the correct answer among multiple choices. Without this side information, we cannot tell if the correct answer corresponds to \emph{bat} or \emph{rat}.\footnote{The same situation also happens in dealing with erasures, i.e. when trying to fill in the blank in the sentence "The \emph{$\_$at} flies".}

In the other variant, called \emph{unconstrained} coupled neural error correction, we perform the error correction without providing any side information. This is similar to many standard recall algorithms in neural networks. In fact, the unconstrained model can be thought of as a very large convolutional network similar to the model proposed in \cite{KSS_ICML2013}. Thus, the unconstrained model serves as a benchmark to evaluate the performance of the proposed coupled model in this paper. 

Let $z^{(\ell)}(t)$ denote the \emph{average} probability of error for pattern nodes across neural plane $\ell$ in iteration $t$. Thus, a super constraint node in plane $\ell$ receives noisy messages from its neighbors with an average probability $\bar{z}^{(\ell)}$:
$$\bar{z}^{(\ell)} = \frac{1}{2\Omega+1}\sum_{j=-\Omega}^{\Omega} z^{(\ell-j)} ~ \text{s.t.}~ z^{(l)} = 0,~\forall l\notin \{1,\dots,L\}.$$

Our goal is to derive a recursion for $z^{(\ell)}(t+1)$ in terms of $z^{(\ell)}(t)$ and $\bar{z}^{(\ell)}(t)$. To this end, in the graph $\widetilde{W}^{(\ell)}$ let $\lambda_i^{(\ell)}$ and $\rho^{(\ell)}_j$ be the fraction of edges (normalized by the total number of edges in graph $\widetilde{W}^{(\ell)}$) connected to pattern and super nodes with degree $i$ and $j$, respectively. We define the degree distribution polynomials in plane $\ell$ from an \textit{edge perspective} as $\lambda^{(\ell)}(x) = \sum_i \lambda^{(\ell)}_i x^{i}$ and $\rho^{(\ell)}(x) = \sum_j \rho^{(\ell)}_i x^{j-1}$. 

\begin{lemma}\label{lem:recursive}
Let us define $g(z)=1-\rho(1-z)-\sum_{i=1}^{e-1} \frac{z^i}{i!}\frac{d^i\rho(1-z)}{dz^i}$ and $f(z;p_e) = p_e \lambda(z)$, where $e$ is the number of errors each cluster can correct. Then,
\beq \label{z_ell}
z^{(\ell)}(t+1) = f\left(\frac{1}{2\Omega + 1}\sum_{i=-\Omega}^{\Omega}g(\bar{z}^{(\ell-i)}(t));p_e\right).
\eeq
\end{lemma}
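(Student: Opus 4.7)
The plan is to instantiate the standard density-evolution argument on the coupled-plane graph, mirroring the spatially-coupled GLDPC analysis of \cite{pfister_istc} cited earlier. The starting point is the usual tree-like hypothesis: for any fixed $t$, the depth-$2t$ neighborhood of a randomly chosen edge in $\widetilde{W}^{(\ell)}$ is a tree with high probability in the large-cluster limit, so messages arriving at any node along different edges can be treated as independent. This reduces the analysis to two local computations, one per node type, that are then stitched together through the coupling structure.

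First I would derive $g(\bar{z})$ by focusing on a single super node $c$ in some plane $\ell'$. By Algorithm~\ref{algo:coupled} combined with the correction guarantee for Algorithm~\ref{algo:correction}, the super node succeeds in correcting its entire sub-pattern exactly when at most $e$ of its incident pattern neurons are in error, and otherwise leaves them untouched. Fix an outgoing edge to an erroneous pattern node $v$ and condition on $\deg(c)=j$, distributed according to the edge-perspective $\rho^{(\ell')}$. Each of the remaining $j-1$ neighbors of $c$ is erroneous independently with probability $\bar{z}^{(\ell')}$, so $c$ fails to correct $v$ iff at least $e$ of those $j-1$ messages are erroneous. Averaging this tail over $j$ yields $1-\sum_j\rho^{(\ell')}_j\sum_{i=0}^{e-1}\binom{j-1}{i}\bar{z}^{\,i}(1-\bar{z})^{j-1-i}$, and the identity $\sum_j\rho_j\binom{j-1}{i}(1-z)^{j-1-i}=\tfrac{1}{i!}\rho^{(i)}(1-z)$ rewrites this exactly as $g(\bar{z}^{(\ell')})$.

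Turning to the pattern side, a neuron in plane $\ell$ remains in error at iteration $t+1$ iff it was initially in error (probability $p_e$ under the additive noise model) \emph{and} every incident super node failed to correct it. Because its super-node neighbors are distributed uniformly on average across planes $\ell-\Omega,\dots,\ell+\Omega$, the probability that a single incident super node fails equals $\frac{1}{2\Omega+1}\sum_{i=-\Omega}^{\Omega}g(\bar{z}^{(\ell-i)}(t))$; independence across incident edges raises this to a power equal to the pattern degree, and averaging against the edge-perspective pattern-degree distribution replaces that power by $\lambda^{(\ell)}(\cdot)$. Multiplying by $p_e$ produces $f(\cdot;p_e)=p_e\lambda^{(\ell)}(\cdot)$ and hence the recursion of the lemma. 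The main delicate point is really the independence used in the first paragraph: because the coupled architecture forces pattern nodes in adjacent planes to share super nodes, one needs the local-tree concentration result from the spatially-coupled literature to guarantee cycle-freeness in the depth-$2t$ neighborhood and concentration of the edge-perspective degrees on $\lambda^{(\ell)}$ and $\rho^{(\ell)}$; granting that, the rest of the proof is the binomial-to-Taylor calculation above.
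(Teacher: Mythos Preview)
Your proposal is correct and follows essentially the same density-evolution decomposition as the paper: compute the super-node failure probability $g(\bar z)$ as the averaged binomial tail over the edge-perspective degree distribution $\rho$, then compute the pattern-node error probability by multiplying $p_e$ with the $\lambda$-average of the coupled failure probabilities. The only cosmetic differences are that the paper specializes to $e=2$ and asserts the extension, whereas you carry out the general-$e$ computation via the identity $\sum_j\rho_j\binom{j-1}{i}(1-z)^{j-1-i}=\tfrac{1}{i!}\rho^{(i)}(1-z)$, and you make the tree-like independence hypothesis explicit while the paper leaves it implicit.
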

\begin{proof}
Without loss of generality, we prove the lemma for the case that each cluster could correct at least two errors with high probability, i.e. $e=2$. Extending the proof to $e>2$ would be straightforward.

Let $z^{(\ell)}(t)$ denote the \emph{average} probability of error for pattern nodes across neural plane $\ell$ and in iteration $t$. Furthermore, let $\pi^{(\ell)}(t)$ be the \emph{average} probability of a \emph{super constraint node} in plane $\ell$ sending an erroneous message to its neighbors. We will derive recursive expressions for $z^{(\ell)}(t)$ and $\pi^{(\ell)}(t)$. 

A super constraint node in plane $\ell$ receives noisy messages from its neighbors with an average probability of $\bar{z}^{(\ell)}$, where 
$$\bar{z}^{(\ell)} = \frac{1}{2\Omega+1}\sum_{j=-\Omega}^{\Omega} z^{(\ell-j)}$$
with $z^{(i)} = 0$ for $i \leq 0$ and $i > L$. 

Let $\pi_i^{(\ell)}$ denote the the probability that a super constraint node with degree $i$ in plane $\ell$ sends an erroneous message to one of its neighboring noisy pattern nodes. Then, knowing that each super constraint node (cluster) is capable of correcting at least $e=2$ errors, $\pi_i^{(\ell)}$ is equal to the probability of receiving two or more noisy messages from \emph{other} pattern neurons,
$$\pi_i^{(\ell)} = 1-\left(1-\bar{z}^{(\ell)}\right)^{i-1}-(i-1) \bar{z}^{(\ell)} \left(1-\bar{z}^{(\ell)}\right)^{i-2}.$$
Now, letting $\pi^{(\ell)}(t)$ denote the average probability of sending erroneous nodes by super constraint nodes in plane $\ell$ and in iteration $t$, we will have
\begin{eqnarray*}
\pi^{(\ell)}(t) &=& \mb{E} \{\pi_i^{(\ell)}\}\\ &=& \sum_{i} \rho_{i} \pi_i^{(\ell)} \\
&=& 1-\rho(1-\bar{z}^{(\ell)}(t))-\bar{z}^{(\ell)}(t)\rho'(1-\bar{z}^{(\ell)}(t)),
\end{eqnarray*}
where $\rho(z) = \sum_i \rho_i z^{i-1}$ is the super constraint node degree distribution polynomial and $\rho'(z) = d\rho(z)/dz$. 

To simplify notations, let us define the function $g(z)=1-\rho(1-z)-z\rho'(1-z)$ such that
$$\pi^{(\ell)}(t) = g(\bar{z}^{(\ell)}(t)).$$
Now consider a given pattern neuron with degree $j$ in plane $\ell$. Let $z_j^{(\ell)}(t+1)$ denote the probability of sending an erroneous message by this node in iteration $t+1$. Then, $z_j^{(\ell)}(t+1)$ is equal to the probability of this node being noisy in the first place ($p_e$) and having all its super constraint nodes sending erroneous messages in iteration $t$, the average probability of which is $$\bar{\pi}^{(\ell)}(t)=\frac{1}{2\Omega + 1}\sum_{i=-\Omega}^{\Omega}\pi^{(\ell-i)}(t).$$ Now, since $z^{(\ell)}(t+1) = \mb{E} \{z_j^{(\ell)}(t+1)\}$, we get
\beqa
z^{(\ell)}(t+1) &=& p_e \sum_j \lambda_j \left( \bar{\pi}^{(\ell)} \right)^{j} \nonumber \\
&=&p_e \lambda(\bar{\pi}^{(\ell)}) \nonumber \\
&=& p_e \lambda (\frac{1}{2\Omega + 1}\sum_{i=-\Omega}^{\Omega}g(\bar{z}^{(\ell-i)}(t))) \nonumber
\eeqa

Again to simplify the notation, let us define the function $f(z;p_e) = p_e \lambda(z)$. This way, we will have the recursion as:
$$z^{(\ell)}(t+1) = f\left(\frac{1}{2\Omega + 1}\sum_{i=-\Omega}^{\Omega}g(\bar{z}^{(\ell-i)}(t));p_e\right).$$
\end{proof}

The decoding will be successful if $z^{(\ell)}(t+1) < z^{(\ell)}(t),\ \forall \ell$. As a result, we look for the maximum $p_e$ such that $$f\left(\frac{1}{2\Omega + 1}\sum_{i=-\Omega}^{\Omega}g(\bar{z}^{(\ell-i)}(t));p_e\right) < z^{(\ell)} ~\text{for}~ z^{(\ell)} \in [0,p_e].$$ Let $p_e^\dagger$ and $p_e^*$ be the maximum $p_e$'s that admit successful decoding for the uncoupled and coupled systems, respectively. To estimate these thresholds, we follow the approach recently proposed in \cite{pfister_istc} and define a potential function to track the evolution of Eq.~(\ref{z_ell}). Let $\textbf{z}=\{z^{(1)},\dots,z^{(L)}\}$ denote the vector of average probabilities of error for pattern neurons in each plane. Furthermore, let $\textbf{f}(\textbf{z};p_e): \mb{R}^L \rightarrow \mb{R}^L$ and $\textbf{g}(\textbf{z}): \mb{R}^L \rightarrow \mb{R}^L$ be two component-wise vector functions such that $[\textbf{f}(\textbf{z};p_e)]_i = f(z_i;p_e)$ and $[\textbf{g}(\textbf{z})]_i = g(z_i)$, where $f(z_i;p_e)$ and $g(z_i)$ are defined in Lemma~\ref{lem:recursive}.
Using these definitions, we can rewrite Eq.~(\ref{z_ell}) in the vector form as \cite{pfister_istc}:
\beq\label{z_vector}
\textbf{z}(t+1) = A^\top\textbf{f}(A\textbf{g}(\textbf{z}(t));p_e)
\eeq
where $A$ is the \emph{coupling matrix} defined as\footnote{Matrix $A$ corresponds to the unconstrained system. A similar matrix can be defined for the constrained case.}:
\begin{figure}[h!]
\begin{center}
\begin{tikzpicture}[scale=.5]
\node at (-2.9,5.1)[yshift=0,xshift=0,rectangle,minimum size=22mm,below delimiter=\{] (v3) {};
\node at (-2.9,3.05)[rectangle] (v1){$\Omega$};
\node at (-7,0)(v0) {$A=\frac{1}{2\Omega+1}$};
\node at (-2.3,-.3)[rectangle,minimum size=27mm,left delimiter={[}] (v1) {};
\node at (2.9,-.3)[rectangle,minimum size=27mm,right delimiter={]}] (v2) {};
\node at (-4.5,1.8)[rectangle] (v1){$1$};
\node at (-4.5,1.8)[xshift =5mm,yshift =0mm,rectangle] (v1){$1$};
\node at (-4.5,1.8)[xshift =12mm,yshift =0mm,rectangle] (v1){$\dots$};
\node at (-4.5,1.8)[xshift =17mm,yshift =0mm,rectangle] (v1){$1$};
\node at (-4.5,1.8)[xshift =22mm,yshift =0mm,rectangle] (v1){$0$};
\node at (-4.5,1.8)[xshift =27mm,yshift =0mm,rectangle] (v1){$0$};
\node at (-4.5,1.8)[xshift =32mm,yshift =0mm,rectangle] (v1){$0$};
\node at (-4.5,1.8)[xshift =39mm,yshift =0mm,rectangle] (v1){$\dots$};
\node at (-4.5,1.8)[xshift =44mm,yshift =0mm,rectangle] (v1){$0$};
\node at (-4.5,1.8)[xshift =49mm,yshift =0mm,rectangle] (v1){$0$};

\node at (-4.5,1.8)[xshift =0mm,yshift =-5mm,rectangle] (v1){$1$};
\node at (-4.5,1.8)[xshift =5mm,yshift =-5mm,rectangle] (v1){$1$};
\node at (-4.5,1.8)[xshift =12mm,yshift =-5mm,rectangle] (v1){$\dots$};
\node at (-4.5,1.8)[xshift =17mm,yshift =-5mm,rectangle] (v1){$1$};
\node at (-4.5,1.8)[xshift =22mm,yshift =-5mm,rectangle] (v1){$1$};
\node at (-4.5,1.8)[xshift =27mm,yshift =-5mm,rectangle] (v1){$0$};
\node at (-4.5,1.8)[xshift =32mm,yshift =-5mm,rectangle] (v1){$0$};
\node at (-4.5,1.8)[xshift =39mm,yshift =-5mm,rectangle] (v1){$\dots$};
\node at (-4.5,1.8)[xshift =44mm,yshift =-5mm,rectangle] (v1){$0$};
\node at (-4.5,1.8)[xshift =49mm,yshift =-5mm,rectangle] (v1){$0$};

\node at (-4.5,1.8)[xshift =0mm,yshift =-9mm,rectangle] (v1){$\vdots$};

\node at (-4.5,1.8)[xshift =0mm,yshift =-16mm,rectangle] (v1){$0$};
\node at (-4.5,1.8)[xshift =5mm,yshift =-16mm,rectangle] (v1){$0$};
\node at (-4.5,1.8)[xshift =12mm,yshift =-16mm,rectangle] (v1){$\dots$};
\node at (-4.5,1.8)[xshift =17mm,yshift =-16mm,rectangle] (v1){$0$};
\node at (-4.5,1.8)[xshift =22mm,yshift =-16mm,rectangle] (v1){$0$};
\node at (-4.5,1.8)[xshift =27mm,yshift =-16mm,rectangle] (v1){$1$};
\node at (-4.5,1.8)[xshift =32mm,yshift =-16mm,rectangle] (v1){$1$};
\node at (-4.5,1.8)[xshift =39mm,yshift =-16mm,rectangle] (v1){$\dots$};
\node at (-4.5,1.8)[xshift =44mm,yshift =-16mm,rectangle] (v1){$1$};
\node at (-4.5,1.8)[xshift =49mm,yshift =-16mm,rectangle] (v1){$1$};

\node at (-4.5,1.8)[xshift =0mm,yshift =-21mm,rectangle] (v1){$0$};
\node at (-4.5,1.8)[xshift =5mm,yshift =-21mm,rectangle] (v1){$0$};
\node at (-4.5,1.8)[xshift =12mm,yshift =-21mm,rectangle] (v1){$\dots$};
\node at (-4.5,1.8)[xshift =17mm,yshift =-21mm,rectangle] (v1){$0$};
\node at (-4.5,1.8)[xshift =22mm,yshift =-21mm,rectangle] (v1){$0$};
\node at (-4.5,1.8)[xshift =27mm,yshift =-21mm,rectangle] (v1){$0$};
\node at (-4.5,1.8)[xshift =32mm,yshift =-21mm,rectangle] (v1){$1$};
\node at (-4.5,1.8)[xshift =39mm,yshift =-21mm,rectangle] (v1){$\dots$};
\node at (-4.5,1.8)[xshift =44mm,yshift =-21mm,rectangle] (v1){$1$};
\node at (-4.5,1.8)[xshift =49mm,yshift =-21mm,rectangle] (v1){$1$};
\end{tikzpicture}
\end{center}
\end{figure}

At this point, the potential function of the unconstrained coupled system could be defined as \cite{pfister_istc}:
\beqa\label{eq:potential}
U(\textbf{z};p_e) &=& \int_{C} \textbf{g}'(\textbf{u})(\textbf{u}-A^\top \textbf{f}(A\textbf{g}(\textbf{u})).d\textbf{u}\nonumber \\
&=& \textbf{g}(\textbf{z})^\top\textbf{z}-G(\textbf{z})-F(A\textbf{g}(\textbf{z});p_e)
\eeqa
where $\textbf{g}'(\textbf{z}) = \hbox{diag}([g'(u_i)])$, $G(\textbf{z}) = \int_{C} \textbf{g}(\textbf{u})\cdot d\textbf{u}$ and $F(\textbf{z}) = \int_{C} \textbf{f}(\textbf{u})\cdot d\textbf{u}$. 

A similar quantity can be defined for the uncoupled (scalar) system as $U_s(z;p_e)=zg(z)-G(z)-F(g(z);p_e)$ \cite{pfister_istc}, where $z$ is the average probability of error in pattern neurons. The scalar potential function is defined in the way that $U_s'(z;p_e) > 0$ for $p_e \leq p_e^\dagger$. In other words, it ensures that $z(t+1) = f(g(z(t);p_e) < z(t)$ (successful decoding) for $p_e \leq p_e^\dagger$.

Furthermore, let us define $p_e^* = \sup\{p_e|\min(U_s(z;p_e) \geq 0\}$. Thus, in order to find $p_e^*$, it is sufficient to find the maximum $p_e$ such that $\min\{U_s(z;p_e)\} > 0$ \cite{pfister_istc}. We will show that the constrained coupled system achieves successful error correction for $p_e < p_e^*$. Intuitively, we expect to have $p_e^\dagger \leq p_e^*$ (side information only helps), and as a result a better error correction performance for the constrained system. Theorem~\ref{theorem:constrained_coupled} and our experimental result will confirm this intuition later in the paper.

Let $\Delta E(p_e) = \min_z U_s(z;p_e)$ be the \emph{energy gap} of the uncoupled system for $p_e \in (p_e^\dagger,1]$ \cite{pfister_istc}. The next theorem borrows the results of \cite{pfister_istc} and \cite{kudekar} to show that the constrained coupled system achieves successful error correction for $p_e < p_e^*$.
\begin{theorem}\label{theorem:constrained_coupled}
In the constrained system, when $p_e < p_e^*$ the potential function decreases in each iteration. Furthermore, if $\Omega > \frac{\Vert U''(\textbf{z};p_e)\Vert_{\infty}}{ \Delta E(p_e)}$, the only fixed point of Eq.~(\ref{z_vector}) is $0$.
\end{theorem}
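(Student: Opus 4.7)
The plan is to follow the potential-function framework of Yedla--Jian--Nguyen--Pfister \cite{pfister_istc} and Kudekar--Richardson--Urbanke \cite{kudekar}, instantiated with our admissible pair $f(z;p_e)=p_e\lambda(z)$ and $g(z)=1-\rho(1-z)-\sum_{i=1}^{e-1}\frac{z^i}{i!}\rho^{(i)}(1-z)$. The first preliminary step is to verify that this pair is admissible in their sense: both functions are smooth and non-decreasing on $[0,1]$ with $f(0;p_e)=g(0)=0$, and $g'(z)>0$ on $(0,1]$. These properties follow from the non-negativity of the coefficients of $\lambda$ and $\rho$ together with a short inductive argument for the higher-order terms appearing in $g$. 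This check licenses the integral representation of $U$ in Eq.~(\ref{eq:potential}) and the gradient identity
\beq
\nabla U(\textbf{z};p_e)=\textbf{g}'(\textbf{z})\bigl(\textbf{z}-A^\top\textbf{f}(A\textbf{g}(\textbf{z});p_e)\bigr),
\eeq
so that the fixed points of Eq.~(\ref{z_vector}) are exactly the critical points of $U$.

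For the first assertion (one-step decrease of $U$), I would apply a second-order Taylor expansion of $U$ along the segment from $\textbf{z}(t)$ to $\textbf{z}(t+1)$, obtaining
\beq
U(\textbf{z}(t+1);p_e)-U(\textbf{z}(t);p_e)\leq -\Delta\textbf{z}^\top\textbf{g}'(\textbf{z}(t))\Delta\textbf{z}+\tfrac{1}{2}\Vert U''\Vert_\infty\Vert\Delta\textbf{z}\Vert^2,
\eeq
where $\Delta\textbf{z}=\textbf{z}(t+1)-\textbf{z}(t)$. Positivity of $\textbf{g}'$ on the dynamic range, combined with $p_e<p_e^*$ (which by definition keeps the scalar potential $U_s$ non-negative and hence controls the coercivity of the first-order term), gives strict descent unless $\Delta\textbf{z}=\textbf{0}$.

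For the second assertion I would run the standard shift--telescoping argument. Suppose $\textbf{z}^*\neq\textbf{0}$ is a fixed point of Eq.~(\ref{z_vector}) in the constrained system, so that $z^{*(\ell)}=0$ for $\ell\notin\{1,\dots,L\}$. Let $S$ denote the right shift $(S\textbf{z}^*)^{(\ell)}=z^{*(\ell-1)}$. The block structure of the coupling matrix $A$ makes $U(\textbf{z}^*;p_e)-U(S\textbf{z}^*;p_e)$ telescope to a sum of scalar potentials that collapses, via the boundary condition, to $U_s(z^{*(1)};p_e)\geq\Delta E(p_e)$ whenever $\textbf{z}^*\neq\textbf{0}$. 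On the other hand, because $S\textbf{z}^*$ and $\textbf{z}^*$ disagree only at $O(1)$ boundary coordinates inside an averaging window of width $2\Omega+1$, a mean-value estimate bounds the same difference by $\Vert U''\Vert_\infty/\Omega$. Combining these inequalities yields $\Omega\cdot\Delta E(p_e)\leq\Vert U''\Vert_\infty$, contradicting the hypothesis $\Omega>\Vert U''\Vert_\infty/\Delta E(p_e)$ and forcing $\textbf{z}^*=\textbf{0}$.

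The main technical obstacle is a rigorous justification of the ``telescoping to a scalar gap'' identity: one must show that the extra $\sum_{i=1}^{e-1}\frac{z^i}{i!}\rho^{(i)}(1-z)$ terms appearing in our $g$ (for $e\geq 2$) do not spoil the exact collapse that holds in the textbook LDPC case $g(z)=1-\rho(1-z)$. This amounts to reorganizing the potential as $U(\textbf{z};p_e)=\textbf{g}(\textbf{z})^\top\textbf{z}-G(\textbf{z})-F(A\textbf{g}(\textbf{z});p_e)$ and verifying shift-equivariance block by block, or alternatively reproducing the calculation of \cite[Lemma~3]{pfister_istc} with the generalized $g$; both routes require some careful, but essentially mechanical, bookkeeping with the higher-derivative terms of $\rho$.
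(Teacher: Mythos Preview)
Your outline follows the \cite{pfister_istc} framework that the paper itself invokes, so the high-level route is the same. There are, however, two genuine gaps and one misplaced worry.

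First, you omit the spatial-monotonicity step that the paper singles out as the opening move: for the \emph{constrained} system one shows (by induction using the boundary condition and the monotonicity of $f,g$, as in \cite[Lemma~22]{kudekar}) that the profile is ordered, $z^{(1)}(t)\le z^{(2)}(t)\le\cdots\le z^{(L)}(t)$. This is not cosmetic. Your shift--telescoping computation implicitly needs it: without monotonicity the nontrivial fixed point need not have a well-defined plateau, and the boundary contribution of $U(\textbf{z}^*)-U(S\textbf{z}^*)$ does not reduce to a single scalar potential value that can be compared with $\Delta E(p_e)$. (Your claim that it collapses to $U_s(z^{*(1)};p_e)$ is also at the wrong end; in the constrained system the small values sit near the boundary and the relevant contribution comes from the interior maximum, which monotonicity pins down.)

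Second, the Taylor argument you propose for the one-step decrease of $U$ does not close. The first-order term you isolate is $-\Delta\textbf{z}^\top\textbf{g}'(\textbf{z}(t))\Delta\textbf{z}$, but for $e\ge 2$ one has $g'(0)=0$ (for $e=2$, $g'(z)=z\rho''(1-z)$), so $\textbf{g}'$ is not uniformly positive on the dynamic range and the second-order remainder can dominate near $\textbf{0}$. The paper does not attempt a direct Taylor bound; it simply appeals to \cite[Lemma~3]{pfister_istc} after establishing the ordered profile, and that lemma's proof uses a different mechanism than the quadratic comparison you sketch.

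Finally, your ``main technical obstacle'' is not one: the potential framework of \cite{pfister_istc} is stated for a general admissible pair $(f,g)$ and nowhere relies on $g(z)=1-\rho(1-z)$ specifically. The identity $U(\textbf{z};p_e)=\textbf{g}(\textbf{z})^\top\textbf{z}-G(\textbf{z})-F(A\textbf{g}(\textbf{z});p_e)$ and the shift calculus hold verbatim for your $g$ with the extra $\sum_{i=1}^{e-1}\frac{z^i}{i!}\rho^{(i)}(1-z)$ terms, once admissibility is checked; no bespoke bookkeeping is required.
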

\begin{proof}
The proof of the theorem relies on results from \cite{kudekar} to show that the entries in the vector $\textbf{z}(t) = [z^{(1)}(t),\dots,z^{(L)}(t)]$ are non-decreasing, i.e.,  $$z^{(1)}(t)\leq z^{(2)}(t)\leq \dots \leq z^{(L)}(t).$$ This can be shown using induction and the fact that the functions $\textbf{f}(\cdot,p_e)$ and $\textbf{g}(\cdot)$ are non-decreasing (see the proof of Lemma 22 in \cite{kudekar} for more details).

Then, one can apply the result of Lemma 3 in \cite{pfister_istc} to show that the potential function of the constrained coupled system decreases in each iteration. Finally, when $$\Omega > \Vert U''(\textbf{z};p_e)\Vert_{\infty} / \Delta E(p_e)$$ one could apply Theorem 1 of \cite{pfister_istc} to show the convergence of the probability of errors to zero.
\end{proof}
Note that Theorem~\ref{theorem:constrained_coupled} provides a sufficient condition (on $\Omega$) for the coupled system to ensure it achieves successful error correction for every $p_e$ upto $p_e = p_e^*$. However, the condition provided by Theorem~\ref{theorem:constrained_coupled} usually requires $\Omega$ to be \emph{too large}, i.e. $\Omega$ is required to be as large as $1000$ to $10000$, depending on the degree distributions. Nevertheless, in the next section we show that the analysis is still quite accurate for moderate values of $\Omega$, i.e. $\Omega \simeq 2,3$, meaning that a system with a small coupling parameters could still achieve very good error correction in practice. 

\section{Pattern retrieval capacity}
The following theorem shows that the number of patterns that can be memorized by the proposed scheme is exponential in $n$, the pattern size. 
\begin{theorem}\label{theorem_exponential_solution}
Let $\mc{X}$ be the $\mc{C} \times n$ dataset matrix, formed by $\mc{C}$ vectors of length $n$ with entries from the set $\mc{S}$. Let also $k = rn$ for some $0<r<1$. Then, there exists a set of patterns for which $\mc{C} = a^{rn}$, with $a > 1$, and $\hbox{\text{rank}}(\mc{X}) = k <n$.
\end{theorem}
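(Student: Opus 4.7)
The plan is to prove the theorem by an explicit construction that exhibits a $k$-dimensional subspace of $\mb{R}^n$ containing at least $a^{rn}$ vectors whose entries lie in $\mc{S}$. The intuition is that once we fix such a subspace, the rank bound is automatic, so the only real work is showing that a subspace of the desired dimension can contain exponentially many $\mc{S}$-valued vectors.

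First, I would partition the index set $\{1,\dots,n\}$ into $k = rn$ disjoint blocks $I_1,\dots,I_k$ of (nearly) equal size $n/k = 1/r$, and let $v_i \in \{0,1\}^n$ be the indicator vector of $I_i$. Because the $v_i$'s have pairwise disjoint supports, they are linearly independent and span a $k$-dimensional subspace $V \subseteq \mb{R}^n$. Next, for every coefficient vector $\boldsymbol{\alpha} = (\alpha_1,\dots,\alpha_k) \in \mc{S}^k$, define the pattern
\beq
\textbf{x}_{\boldsymbol{\alpha}} \;=\; \sum_{i=1}^{k} \alpha_i \, v_i.
\eeq
By the disjoint-support property, the $j$-th coordinate of $\textbf{x}_{\boldsymbol{\alpha}}$ equals exactly the coefficient $\alpha_i$ for the unique block $I_i$ containing $j$, so $\textbf{x}_{\boldsymbol{\alpha}} \in \mc{S}^n$. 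Distinct $\boldsymbol{\alpha}$'s produce distinct patterns (since $v_1,\dots,v_k$ are independent), giving $\mc{C} = S^k = S^{rn}$ patterns. Taking $a = S > 1$ yields $\mc{C} = a^{rn}$ as claimed.

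Finally, I would verify the rank condition. Since every $\textbf{x}_{\boldsymbol{\alpha}}$ lies in the $k$-dimensional subspace $V$, we have $\hbox{\text{rank}}(\mc{X}) \leq k$. On the other hand, the collection of patterns contains $v_i$ itself (take $\boldsymbol{\alpha} = e_i$), so all $k$ basis vectors appear as rows of $\mc{X}$ and therefore $\hbox{\text{rank}}(\mc{X}) = k < n$. There is no real obstacle here; the only point requiring care is the choice of basis vectors with disjoint supports, which is what guarantees that integer combinations with coefficients in $\mc{S}$ stay inside $\mc{S}^n$. A more sophisticated construction (e.g., a random $k$-dimensional subspace) would give a larger $a$ but require counting lattice points, which is unnecessary for the exponential-capacity statement the theorem asks for.
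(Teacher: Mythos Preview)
Your construction is correct for the theorem as stated: the disjoint-support indicator basis guarantees that every integer combination with coefficients in $\mc{S}$ lands back in $\mc{S}^n$, the count $S^{rn}$ is immediate, and the rank argument is clean.

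The paper's proof is also constructive but takes a different route. Instead of indicator vectors, it builds a $k\times n$ generator matrix $G$ with a specific block-diagonal-with-overlap structure (blocks $\hat{G}$ arranged to match the planes and clusters of the coupled architecture), then forms patterns as $\textbf{x}=\textbf{u}G$ for $\textbf{u}\in\{0,\dots,\upsilon-1\}^k$. Because columns of $G$ can have several nonzeros, the authors must bound the entries of $\textbf{x}$ by $d^*(\gamma-1)(\upsilon-1)\le S-1$, where $d^*$ is the maximum column weight and $\gamma$ bounds the entries of $G$; this forces $\upsilon$ (their $a$) to be smaller than your $S$. Your argument is the degenerate case $d^*=1$, $\gamma=2$, which recovers $\upsilon=S$. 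What the paper's extra generality buys is compatibility with the network model: the staircase shape of $G$ ensures that the restriction of any pattern to a local cluster lies in a subspace of dimension strictly less than the cluster size, so the learning phase (which needs nontrivial dual vectors $W^{(\ell,d)}$ in every cluster) actually has something to learn. Your block-constant patterns prove the bare capacity statement more simply, but do not by themselves exhibit the per-cluster redundancy that the rest of the paper relies on.
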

\begin{proof}
The proof is based on construction: we construct a data set $\mc{X}$ with the required properties such that it can be memorized by the proposed neural network. To simplify the notations, we assume all the clusters have the same number of pattern and constraint neurons, denote by $\widetilde{n}_c$ and $\widetilde{m}_c$. In other words, $n_{\ell,d} = \widetilde{n}_c$ and $m_{\ell,d} = \widetilde{m}_c $ for all $\ell=\{1,\dots,L\}$ and $d=\{1,\dots,D\}$.

We start by considering a matrix $G \in \mb{R}^{k \times n}$, with non-negative integer-valued entries between $0$ and $\gamma-1$ for some $\gamma \geq 2$. We also assume $k = rn$, with $0 < r < 1$.

To construct the database, we first divide the columns of $G$ into $L$ sets, each corresponding to the neurons in one plain. Furthermore, in order to ensure that all the sub-patterns within each cluster form a subspace with dimension less than $\widetilde{n}_c$, we propose the following structure for the generator matrix $G$. This structure ensures that the rank of any sub-matrix of $G$ composed of $\widetilde{n}_c$ columns is less than $\widetilde{n}_c$. In the matrices below, the hatched blocks represent parts of the matrix with \emph{some} non-zero entries. To simplify visualization, let us first define the sub-matrix $\hat{G}$ as the building blocks of $G$:
\vspace{-1cm}
\begin{center}
\begin{tikzpicture}[scale=1,xcenter around={0,0}{.335\textwidth,8}]
\node at (0.35,-2.1) (v0) {$\hat{G}=$};
\node at (0,0)[yshift=-2.1cm,xshift=3.1cm,rectangle,minimum size=42mm,left delimiter={[}] (v1) {};
\node at (0,0)[yshift=-2.1cm,xshift=2.7cm,rectangle,minimum size=42mm,right delimiter={]}] (v2) {};
\draw[pattern=north west lines][xshift=1cm] (0,0)--(1.5,0)--(1.5,-0.75)--(2.25,-0.75)--(2.25,-1.5)--(3,-1.5)--(3,-2.25)--(3.75,-2.25)--(3.75,-3.75)--(2.25,-3.75)--(2.25,-3)--(1.5,-3)--(1.5,-2.25)--(0.75,-2.25)--(0.75,-1.5)--(0,-1.5)--cycle;
\node at (1.5,1.3)[yshift=-.15cm,xshift=0.25cm,rectangle,minimum size=15mm,below delimiter=\{] (v3) {};
\node at (1.5,1.15)[yshift=-.6cm,xshift=0.25cm] (v8) {$n/(D\cdot L)$};
\node at (-1.9,-.65)[yshift=-3.5,xshift=1.5cm,rectangle,minimum size=15mm,right delimiter=\{] (v4) {$k/(D\cdot L)$};
\end{tikzpicture}
\end{center}

Then, $G$ is structured as

\begin{center}

\begin{tikzpicture}[scale=0.25,xcenter around={0,0}{.095\textwidth,8}]

\node at (-11,2)(v0) {$G$ = };
\node at (-2,1.8)[rectangle,minimum size=34mm,left delimiter={[}] (v1) {};
\node at (2,1.8)[rectangle,minimum size=34mm,right delimiter={]}] (v2) {};
\node at (-6.8,6.3)[rectangle,minimum size=10mm,pattern=north west lines] (v2) {};
\draw[-,dashed](-4.6,8.7)--(-4.6,-5);
\node at (-2.3,6.3)[rectangle,minimum size=10mm,pattern=north west lines] (v2) {};
\draw[-,dashed](0.1,8.7)--(-.1,-5);
\node at (-2.3,1.8)[rectangle,minimum size=10mm,pattern=north west lines] (v2) {};
\draw[-,dashed](4.4,8.7)--(4.4,-5);
\node at (2.2,1.8)[rectangle,minimum size=10mm,pattern=north west lines] (v2) {};
\node at (2.2,-2.7)[rectangle,minimum size=10mm,pattern=north west lines] (v2) {};
\node at (6.7,-2.7)[rectangle,minimum size=10mm,pattern=north west lines] (v2) {};
\draw[-,dashed](-8.9,4.1)--(8.9,4.1);
\draw[-,dashed](-8.9,-.4)--(8.9,-.4);
\end{tikzpicture}
\end{center}
where each hatched block represents a random realization of $\hat{G}$.

Now consider a random vector $u \in \mb{R}^k$ with integer-valued-entries between $0$ and $\upsilon-1$, where $\upsilon \geq 2$. We construct the dataset by assigning the pattern $\textbf{x} \in \mc{X}$ to be $\textbf{x} = \textbf{u} \cdot G$, \emph{if} all the entries of $\textbf{x}$ are between $0$ and $S-1$. Obviously, since both $\textbf{u}$ and $G$ have only non-negative entries, all entries in $\textbf{x}$ are non-negative. Therefore, it is the $S-1$ upper bound that we have to worry about. 

Let $\varrho_j$ denote the $j^{th}$ column of $G$. Then the $j^{th}$ entry in $\textbf{x}$ is equal to $x_j = \textbf{u} \cdot \varrho_j$. Suppose $\varrho_j$ has $d_j$ non-zero elements. Then, we have: 
\bed
x_j = u \cdot \varrho_j \leq d_j (\gamma-1) (\upsilon-1)
\eed

Therefore, letting $d^* = \max_{j} d_j$, we could choose $\gamma$, $\upsilon$ and $d^*$ such that
\beq\label{capaci_inequal}
S-1 \geq d^* (\gamma-1) (\upsilon-1)
\eeq
to ensure all entries of $x$ are less than $S$. 

As a result, since there are $\upsilon^k$ vectors $u$ with integer entries between $0$ and $\upsilon -1$, we will have $\upsilon^k = \upsilon^{rn}$ patterns forming $\mc{X}$. Which means $\mc{C} = \upsilon^{rn}$, which would be an exponential number in $n$ if $\upsilon \geq 2$. 
\end{proof}

\section{Simulations}\label{sec:simulation} 
In this paper, we are mainly interested in the performance of the recall phase and demonstrate a way, by the means of spatial coupling, to improve upon the previous art. To this end, we assume that the learning phase is done (by using our proposed algorithm in \cite{KSS_ICML2013}) and  we have the weighted connectivity graphs available. For the ease of presentation, we can simply  produce these matrices by generating sparse random bipartite graphs and assign random weights to the connections. Given the weight matrices and the fact that they are orthogonal to the sub-patterns, we can assume w.l.o.g that in the recall phase we are interested in recalling the all-zero pattern from its noisy version.

We treat the patterns in the database as $2D$ images of size $64\times 64$.  More precisely, we have generated a random network with $29$ planes and $29$ clusters within each plane (i.e., $L=D=29$). Each local cluster is composed of $8\times8$ neurons and each pattern neuron (pixel) is connected to $2$ consecutive planes and $2$ clusters within each plane (except at the boundaries). This is achieved by  moving the $8\times 8$ rectangular window over the $2D$ pattern horizontally and vertically. The degree distribution of this setting is $\lambda = \{0.0011,0.0032,0.0043,0.0722,0,0.0054,0,0.0841,0.0032,0,0,0.098,0,0,0,0.7284 \}$, $\rho_{64} = 1$ and $\rho_j = 0$ for $1 \leq j \leq 63$.

We investigated the performance of the recall phase by randomly generating a $2D$ noise pattern in which each entry is set to $\pm 1$ with probability $p_e/2$ and $0$ with probability $1-p_e$. We then apply Algorithm~\ref{algo:coupled} with $t_{\max} = 10$ to eliminate the noise. Once finished, we declare failure if the output of the algorithm, $\hat{\textbf{x}}$, is not equal to  the  pattern $\textbf{x}$ (assumed to be the all-zero vector). 

Figure~\ref{fig:PER_coupled} illustrates the final error rate of the proposed algorithm, for the constrained and unconstrained system. For the constrained system, we fixed the state of a patch of neurons of size $3 \times 3$ at the four corners of the $2D$ pattern. The results are also compared to the similar algorithms in \cite{KSS} and \cite{KSS_ICML2013} (uncoupled systems). In \cite{KSS} (the dashed-dotted curve), there are no clustering while in \cite{KSS_ICML2013} the network is divided into $50$ overlapping clusters all lying on a single plane (the dotted curve). Although clustering improves the performance, it is still inferior than the coupled system with some side information (the solid curve). Even though the same recipe (i.e., Alg.~\ref{algo:correction}) is used in all approaches, the differences in the architectures has a profound effect on the performance. One also notes the sheer positive effect of network size on the performance (the dotted vs. dashed curves).

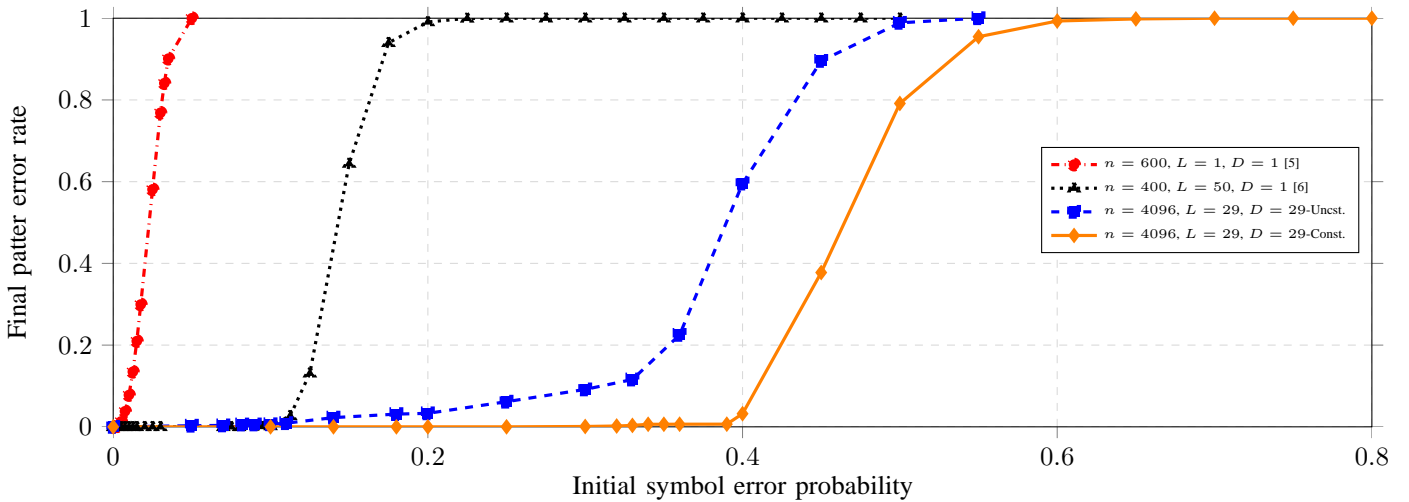
\begin{figure}[t]
\centering
\begin{tikzpicture}
    \begin{axis}[
        width=\textwidth, height=7cm,     
        grid = major,
        grid style={dashed, gray!30},
        xmin=0,     
        xmax=.8,    
        ymin=0,     
        ymax=1.0,   
        /pgfplots/xtick={0,.2,...,.8}, 
        axis background/.style={fill=white},
        ylabel=Final patter error rate,
        xlabel=Initial symbol error probability,
        tick align=outside,
	legend style={
		cells={anchor=west},
		font = \tiny,
		at={(.99,.685)},
		}
	]
 
      \addplot [color=red,dashdotted,very thick,mark=*] table {./PER_Spatially_Coupled_Fig_Data_N_600_L_1_D_1_ITW11.dat};
      \addplot [color=black,dotted,very thick,mark=triangle*] table {./PER_Spatially_Coupled_Fig_Data_N_400_L_50_D_1_ICML13.dat};
      \addplot [color=blue,dashed,very thick,very thick,mark=square*] table {./PER_Spatially_Coupled_Fig_Data_N_4096_L_29_D_29_UnConst.dat};
      \addplot [color=orange,very thick,very thick,mark=diamond*] table {./PER_Spatially_Coupled_Fig_Data_N_4096_L_29_D_29_Const.dat};

      \legend{$n=600$, $L=1$, $D=1$ \cite{KSS}\\ $n=400$, $L=50$, $D=1$ \cite{KSS_ICML2013}\\ $n=4096$, $L=29$, $D=29$-Uncst. \\ $n=4096$, $L=29$, $D=29$-Const. \\}		            
   \end{axis} 
\end{tikzpicture}
\caption{The final pattern error probability for the constrained and unconstrained coupled neural systems.}
\label{fig:PER_coupled}
\end{figure}

Table \ref{table:thresholds} shows the thresholds $p_e^\dagger$ and $p_e^*$ for different values of $e$. From Figure~\ref{fig:PER_coupled} we notice that $ p_e^* \simeq 0.39$ and $p_e^\dagger \simeq .1$ which is close to the thresholds for $e=2$ in Table~\ref{table:thresholds}. Note that according to Theorem~\ref{theorem:constrained_coupled}, a sufficient condition for these thresholds to be exact is for $\Omega$ to be very large. However, the comparison between Table~\ref{table:thresholds} and Figure~\ref{fig:PER_coupled} suggest, one could obtain rather exact results even with $\Omega$ being rather small.
\begin{table}[h]
\centering
\begin{tabular}{|c|c|c|}
\hline
 &$p_e^\dagger$ & $p_e^*$ \\ \hline
$e = 1$ &$0.078$ & $0.114$ \\ \hline
$e = 2$ &$0.197$ & $0.394$ \\ \hline
\end{tabular}
\caption{The thresholds for the uncoupled ($p_e^\dagger$) and coupled ($p_e^*$) systems.}
\label{table:thresholds}
\end{table}

Figure~\ref{fig:potential_coupled} illustrates how the potential function for uncoupled systems behaves as a function of $z$ and for various values of $p_e$. Note that for $p_e \simeq p_e^*$, the minimum value of potential reaches zero, i.e. $\Delta_E(p_e^*) = 0$, and for $p_e > p_e^*$ the potential becomes negative for large values of $z$.

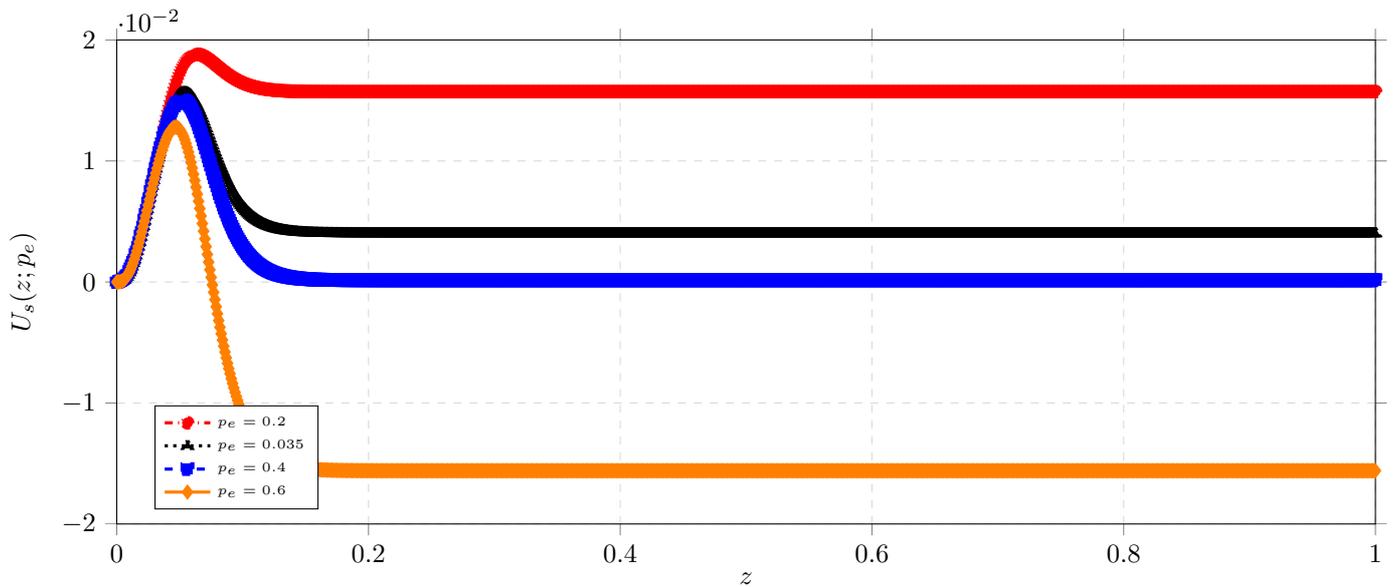
\begin{figure}
\centering
\begin{tikzpicture}
    \begin{axis}[
        width=\textwidth, height=8cm,     
        grid = major,
        grid style={dashed, gray!30},
        xmin=0,     
        xmax=1,    
        ymin=-.02,     
        ymax=.02,   
        /pgfplots/xtick={0,.2,...,1}, 
        axis background/.style={fill=white},
        ylabel=$U_s(z;p_e)$,
        xlabel=$z$,
        tick align=outside,
	legend style={
		cells={anchor=west},
		legend pos=south west,
		font = \tiny,
		}
	]
 
      \addplot [color=red,dashdotted,very thick,mark=*] table {./Potential_Coupled_Fig_Data_N_64_L_29_p_e_02.dat};
      \addplot [color=black,dotted,very thick,mark=triangle*] table {./Potential_Coupled_Fig_Data_N_64_L_29_p_e_035.dat};
      \addplot [color=blue,dashed,very thick,very thick,mark=square*] table {./Potential_Coupled_Fig_Data_N_64_L_29_p_e_04.dat};
      \addplot [color=orange,very thick,very thick,mark=diamond*] table {./Potential_Coupled_Fig_Data_N_64_L_29_p_e_06.dat};

      \legend{$p_e = 0.2$\\ $p_e = 0.035$\\ $p_e = 0.4$\\ $p_e = 0.6$\\}		            
   \end{axis} 
\end{tikzpicture}
\caption{The scalar potential function $U_s$ as function of average pattern neurons error probability, $z$, and different initial symbol error probabilities, $p_e$.}
\label{fig:potential_coupled}
\end{figure}

\section{Conclusions}\label{sec:conclusions}
In this paper, we proposed a novel architecture for neural associative memories. The proposed model comprises a set of neural planes with sparsely connected overlapping clusters. Furthermore, planes are sparsely connected together as well.

Given the similarity of the suggested framework to spatially-coupled codes, we employed recent developments in analyzing these codes to  investigate the performance of our proposed neural algorithm. We also presented numerical simulations that lend additional support to the theoretical analysis. 
We derived two thresholds on the maximum initial bit error probability that can be corrected by the proposed algorithm with probability close to $1$. Using simulations, we confirmed that there is a good match between the thresholds derived theoretically and those obtained in practice.

Given that our main interest in this paper was the performance of the error correcting algorithm in the recall phase, we did not address the learning phase here. However, we are currently in the middle of applying the learning method in \cite{KSS_ICML2013} to a database of natural images to assess the performance of the recall algorithm in this real-world setup as well. 

\subsection*{Acknowledgments}
The authors would like to thank Prof. Henry D. Pfister, Mr. Vahid Aref, and Dr. Seyed Hamed Hassani for their helpful comments and discussions.

\begin{small}

\end{small}

%
%
%
%
%
%
%
%
%
%

\end{document}